\newcommand{\myparagraph}[1]{\smallskip\noindent\textbf{#1}}
\newcommand{\eg}{\text{e}.\text{g}. }
\newcommand{\ie}{\text{i}.\text{e}. }
\newcommand{\owlog}{\text{w}.\text{l}.\text{o}.\text{g}. }
\begin{document}
	\pagestyle{headings}
	\mainmatter

	\def\GCPR20SubNumber{67}

	\title{Haar Wavelet based Block Autoregressive Flows for Trajectories}

	\titlerunning{Haar Wavelet based Block Autoregressive Flows for Trajectories}
	\authorrunning{A.\ Bhattacharyya et al.}
    
    \author{%
    Apratim Bhattacharyya \inst{1}  \and 
    Christoph-Nikolas Straehle \inst{2} \and 
    Mario Fritz \inst{3} \and \\  
    Bernt Schiele \inst{1}
    }%
    \institute{
    Max Planck Institute for Informatics, Saarbr\"{u}cken, Germany,\\
    \email{abhattac@mpi-inf.mpg.de}
    \and
    Bosch Center for Artificial Intelligence, Renningen, Germany
    \and
    CISPA Helmholtz Center for Information Security, Saarbr\"{u}cken, Germany}

	\maketitle

	\begin{abstract}
	    Prediction of trajectories such as that of pedestrians is crucial to the performance of autonomous agents. 
	    While previous works have leveraged conditional generative models like GANs and VAEs for learning the likely future trajectories, accurately modeling the dependency structure of these multimodal distributions, particularly over long time horizons remains challenging.
	    Normalizing flow based generative models can model complex distributions admitting exact inference. These include variants with split coupling invertible transformations that are easier to parallelize compared to their autoregressive counterparts.
	    To this end, we introduce a novel Haar wavelet based block autoregressive model leveraging split couplings, conditioned on coarse trajectories obtained from Haar wavelet based transformations at different levels of granularity. 
	    This yields an exact inference method that models trajectories at different spatio-temporal resolutions in a hierarchical manner. 
        We illustrate the advantages of our approach for generating diverse and accurate trajectories on two real-world datasets -- Stanford Drone and Intersection Drone.
        
	\end{abstract}

\section{Introduction}
\label{sec:introduction}
\begin{figure}[!b]
\centering
\includegraphics[height=2.6cm]{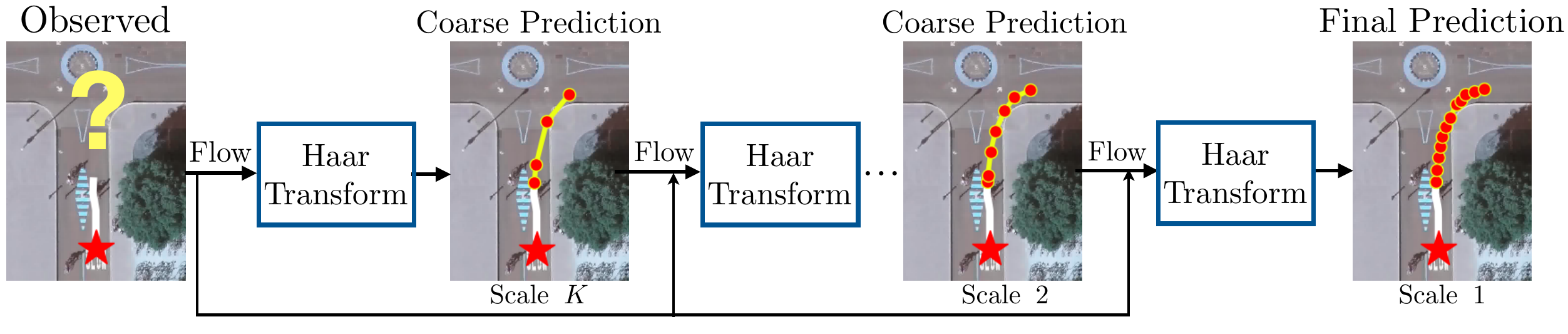}%
\caption{Our normalizing flow based model uses a Haar wavelet based decomposition to block autoregressively model trajectories at $K$ coarse-to-fine scales.}
\label{fig:teaser}
\end{figure}

Anticipation is a key competence for autonomous agents such as self-driving vehicles to operate in the real world. Many such tasks involving anticipation can be cast as trajectory prediction problems, \eg anticipation of pedestrian behaviour in urban driving scenarios.
To capture the uncertainty of the real world, it is crucial to model the distribution of likely future trajectories.
Therefore recent works \cite{bhattacharyya2019conditional,bhattacharyya2018accurate,lee2017desire,sadeghian2018sophie} have focused on modeling the distribution of likely future trajectories using either generative adversarial networks (GANs,  \cite{GoodfellowPMXWOCB14}) or variational autoencoders (VAEs, \cite{kingma2013auto}). 
However, GANs are prone to mode collapse and the performance of VAEs depends on the tightness of the variational lower bound on the data log-likelihood which is hard to control in practice \cite{cremer2018inference,huang2020augmented}.
This makes it difficult to accurately model the distribution of likely future trajectories. 

Normalizing flow based exact likelihood models \cite{dinh2014nice,dinh2016density,kingma2018glow} have  been considered to overcome these limitations of GANs and  VAEs in the context of image synthesis.
Building on the success of these methods, recent approaches have extended the flow models for density estimation of sequential data \eg video \cite{kumar2019videoflow} and audio \cite{kim2019flowavenet}. 
Yet, VideoFlow \cite{kumar2019videoflow} is autoregressive in the temporal dimension which results in the prediction errors accumulating over time \cite{lee2018stochastic} and  reduced efficiency in sampling. 
Furthermore, FloWaveNet \cite{kim2019flowavenet} extends flows to audio sequences with odd-even splits along the temporal dimension, encoding only \emph{local} dependencies \cite{marscf20cvpr,huang2020augmented,kirichenko2020normalizing}.
We address these challenges of flow based models for trajectory generation  and develop an exact inference framework to accurately model future trajectory sequences by harnessing long-term spatio temporal structure in the underlying trajectory distribution.

In this work, we propose \emph{HBA-Flow}, an exact inference model with coarse-to-fine block autoregressive structure to encode long term spatio-temporal correlations for multimodal trajectory prediction.   
The advantage of the proposed framework is that multimodality can be captured over long time horizons by sampling trajectories at coarse-to-fine spatial and temporal scales (\cref{fig:teaser}).
Our contributions are:   
\begin{enumerate*}
    \item we introduce a block autoregressive exact inference model using Haar wavelets where flows applied at a certain scale are conditioned on coarse trajectories from  previous scale. The trajectories at each level are obtained after the application of Haar wavelet based transformations, thereby modeling long term spatio-temporal correlations.
     \item Our HBA-Flow  model, by virtue of block autoregressive structure, integrates a multi-scale block autoregressive prior which further improves modeling flexibility by encoding dependencies in the latent space.
    \item Furthermore,  we show that compared to fully autoregressive approaches \cite{kumar2019videoflow}, our HBA-Flow model is computationally more efficient as the number of sampling steps grows logarithmically in trajectory length.
    \item We demonstrate the effectiveness of our approach for trajectory prediction on Stanford Drone and Intersection Drone, with improved accuracy over long time horizons.
\end{enumerate*}

\section{Related Work}
\myparagraph{Pedestrian Trajectory Prediction.} Work on traffic participant prediction dates back to the Social Forces model \cite{helbing1995social}. More recent works \cite{alahi2016social,helbing1995social,yamaguchi2011you,robicquet2016learning} consider the problem of traffic participant prediction in a social context, by taking into account interactions among traffic participants. Notably, Social LSTM \cite{alahi2016social}  introduces a social pooling layer to aggregate interaction information of nearby traffic participants. An efficient extension of the social pooling operation is developed in \cite{deo2018convolutional} and alternate instance and category layers to model interactions in \cite{ma2019trafficpredict}. Weighted interactions are proposed in \cite{chandra2019traphic}. In contrast,  a multi-agent tensor fusion scheme is proposed in \cite{zhao2019multi} to capture interactions. An attention based model to effectively integrate visual cues in path prediction tasks is proposed in \cite{sadeghian2018car}. However, these methods mostly assume a deterministic future and do not directly deal with the challenges of uncertainty and multimodality.

\myparagraph{Generative Modeling of Trajectories.} To deal with the challenges of uncertainty and multimodality in anticipating future trajectories, recent works employ either conditional VAEs or GANs to capture the distribution of future trajectories. This includes, a conditional VAE based model with a RNN based refinement module \cite{lee2017desire}, a VAE based model \cite{felsen2018will} that ``personalizes'' prediction to individual agent behavior, a diversity enhancing ``Best of Many'' loss \cite{bhattacharyya2018accurate} to better capture multimodality with VAEs, an expressive normalizing flow based prior for conditional VAEs \cite{bhattacharyya2019conditional} among others. However, VAE based models only maximize a lower bound on the data likelihood, limiting their ability to effectively model trajectory data.
Other works, use GANs \cite{gupta2018social,zhao2019multi,sadeghian2018sophie} to generate socially compliant trajectories. GANs lead to missed modes of the data distribution.
Additionally, \cite{rhinehart2018r2p2,deo2019scene} introduce push-forward policies and motion planning for generative modeling of trajectories. Determinantal point processes are used in \cite{yuan2019diverse} to better capture diversity of trajectory distributions. The work of \cite{mangalam2020not} shows that additionally modeling the distribution of trajectory end points can improve accuracy. However, it is unclear if the model of \cite{mangalam2020not} can be used for predictions across variable time horizons.
In contrast to these approaches, in this work we directly maximize the exact likelihood of the trajectories, thus better capturing the underlying true trajectory distribution.

\myparagraph{Autoregressive Models.} Autoregressive exact inference models like PixelCNN \cite{OordKEKVG16} have shown promise in generative modeling.
Autoregressive models for sequential data includes  a convolutional autoregressive model \cite{vanwavenet} for raw audio and an autoregressive method for video frame prediction \cite{kumar2019videoflow}.
In particular, for sequential data involving trajectories, recent works \cite{pajouheshgar2018back} propose an autoregressive method based on visual sources. 
The main limitation of autoregressive approaches is that the models are difficult to parallelize. Moreover, in case of sequential data, errors tend to accumulate over time \cite{lee2018stochastic}.

\myparagraph{Normalizing Flows.} 
Split coupling normalizing flow models with affine transformations \cite{dinh2014nice} offer computationally efficient tractable Jacobians. 
Recent methods \cite{dinh2016density,kingma2018glow} have therefore focused on split coupling flows which are easier to parallelize. 
Flow models are extended in \cite{dinh2016density} to multiscale architecture and the modeling capacity of flow models is further improved in \cite{kingma2018glow} by introducing $1 \times 1$ convolution.
Recently, flow models with more complex invertible components \cite{chen2019residual,pmlr-v97-ho19a} have been leveraged for generative modeling of images.
Recent works like FloWaveNet \cite{kim2019flowavenet} and VideoFlow  \cite{kim2019flowavenet} adapt the multi-scale architecture of Glow \cite{kingma2018glow} with sequential latent spaces to model sequential data, for raw audio and video frames respectively. However, these models still suffer from the limited modeling flexibility of the split coupling flows. 
The ``squeeze'' spatial pooling operation in \cite{kingma2018glow} is replaced with a Haar wavelet based downsampling scheme in \cite{ardizzone2019guided} along the spatial dimensions. Although this leads to improved results on image data, this operation is not particularly effective in case of sequential data as it does not influence temporal receptive fields for trajectories -- crucial for modeling long-term temporal dependencies. Therefore, Haar wavelet downsampling of \cite{ardizzone2019guided} does not lead to significant improvement in performance on sequential data (also observed empirically). In this work, instead of employing Haar wavelets as a downsampling operation for reducing spatial resolution \cite{ardizzone2019guided} in split coupling flows, we formulate a coarse-to-fine block autoregressive model where Haar wavelets produce trajectories at different spatio-temporal resolutions.


\section{Block Autoregressive Modeling of Trajectories}
In this work, we propose a coarse-to-fine block autoregressive exact inference model, \emph{HBA-Flow}, for trajectory sequences.
We first provide an overview of conditional normalizing flows which form the backbone of our HBA-Flow model.
To extend normalizing flows for trajectory prediction, we introduce an invertible transformation based on Haar wavelets which decomposes trajectories into $K$ coarse-to-fine scales (\cref{fig:teaser}).
This is beneficial for expressing long-range spatio-temporal correlations as coarse trajectories provide global context for the subsequent finer scales.
Our proposed HBA-Flow framework integrates the coarse-to-fine transformations with invertible split coupling flows where it block autoregressively models the transformed trajectories at $K$ scales. 

\subsection{Conditional Normalizing Flows for Sequential Data}
We base our {HBA-Flow} model on normalizing flows \cite{dinh2014nice} which are a type of exact inference model. 
In particular, we consider the transformation of the conditional distribution $p(\mathbf{y}|\mathbf{x})$ of trajectories $\mathbf{y}$ to a distribution $p(\mathbf{z}|\mathbf{x})$ over $\mathbf{z}$ with conditional normalizing flows \cite{ardizzone2019guided,bhattacharyya2019conditional} using a sequence of $n$ transformations $g_{i}: \mathbf{h}_{i-1} \mapsto \mathbf{h}_{i}$, with $\mathbf{h}_0=\mathbf{y}$ and  parameters $\theta_i$,

\begin{equation}\label{eq:unflowtrans}
\begin{aligned}
\mathbf{y} \overset{g_{1}}{\longleftrightarrow} \mathbf{h}_{1} \overset{g_{2}}{\longleftrightarrow} \mathbf{h}_{2} \cdots \overset{g_{n}}{\longleftrightarrow} \mathbf{z}.
\end{aligned}
\end{equation}

Given the Jacobians $\mathbf{J}_{\theta_i} = \nicefrac{\partial \mathbf{h}_{i}}{\partial \mathbf{h}_{i-1}}$ of the transformations $g_{i}$, the exact likelihoods can be computed with the change of variables formula,

\begin{equation}\label{eq:flow}
\begin{aligned}
\log p_{\theta}(\mathbf{y} | \mathbf{x}) = \log p(\mathbf{z}|\mathbf{x}) + \sum\limits_{i=1}^{n} \log \;\lvert \det \mathbf{J}_{\theta_i} \lvert,
\end{aligned}
\end{equation}

Given that the density $p(\mathbf{z}|\mathbf{x})$ is known, the likelihood over $\mathbf{y}$ can be computed exactly.
Recent works \cite{dinh2014nice,dinh2016density,kingma2018glow} consider invertible split coupling transformations $g_{i}$ as they provide a good balance between efficiency and modeling flexibility. 
In (conditional) split coupling transformations, the input $\mathbf{h}_{i}$ is split into two halves $ \mathbf{l}_{i}, \;\mathbf{r}_{i} $, and $g_{i}$ applies an invertible transformation only on $\mathbf{l}_{i}$ leaving $\mathbf{r}_{i}$ unchanged. 
The transformation parameters of $\mathbf{l}_{i}$ are dependent on $\mathbf{r}_{i}$ and $\mathbf{x}$, thus $\mathbf{h}_{i+1} = [ g_{i+1}(\mathbf{l}_{i} | \mathbf{r}_{i}, \mathbf{x}),\mathbf{r}_{i} ]$. The main advantage of (conditional) split coupling flows is that both inference and sampling are parallelizable when the transformations  $g_{i+1}$ have an efficient closed form expression of the inverse $g_{i+1}^{-1}$, \eg affine \cite{kingma2018glow} or non-linear squared \cite{ziegler2019latent} and unlike residual flows \cite{chen2019residual}. 

As most of the prior work, \eg \cite{ardizzone2019guided,dinh2014nice,dinh2016density,kingma2018glow}, considers split coupling flows $g_{i}$ that are designed to deal with fixed length data, these models are not directly applicable to data of variable length such as trajectories. 
Moreover, recall that for variable length sequences, while VideoFlow \cite{kumar2019videoflow} utilizes split coupling based flows to model the distribution at each time-step, 
it is still fully autoregressive in the temporal dimension, thus offering limited computational efficiency. 
FloWaveNets \cite{kim2019flowavenet} split $\mathbf{l}_{i}$ and $\mathbf{r}_{i}$ along even-odd time-steps for audio synthesis. This even-odd formulation of the split operation along with the inductive bias \cite{kirichenko2020normalizing,huang2020augmented,marscf20cvpr} of split coupling based flow models is limited when expressing local and global dependencies which are crucial for capturing multimodality of the trajectories over long time horizons.
Next, we introduce our invertible transformation based on Haar wavelets to model trajectories at various coarse-to-fine levels to address the shortcomings of prior flow based methods \cite{kumar2019videoflow,kim2019flowavenet} for sequential data.
\begin{figure}[t!]
\begin{subfigure}{0.5\textwidth}
\centering
   \includegraphics[width=0.55\linewidth]{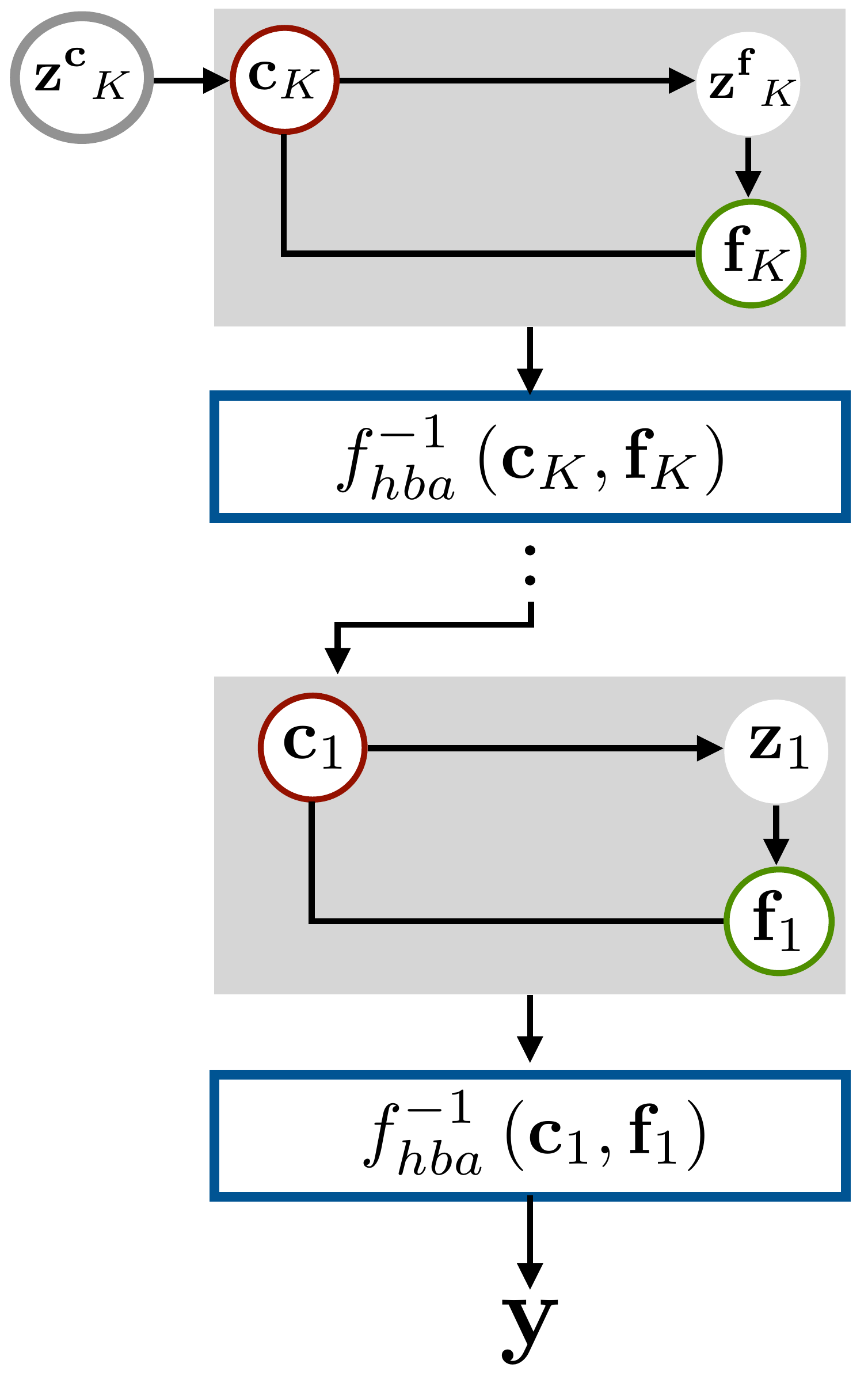}
   \label{fig:gen_model}
\end{subfigure}%
\hfill
\begin{subfigure}{0.5\textwidth}
\centering
  \includegraphics[width=0.75\linewidth]{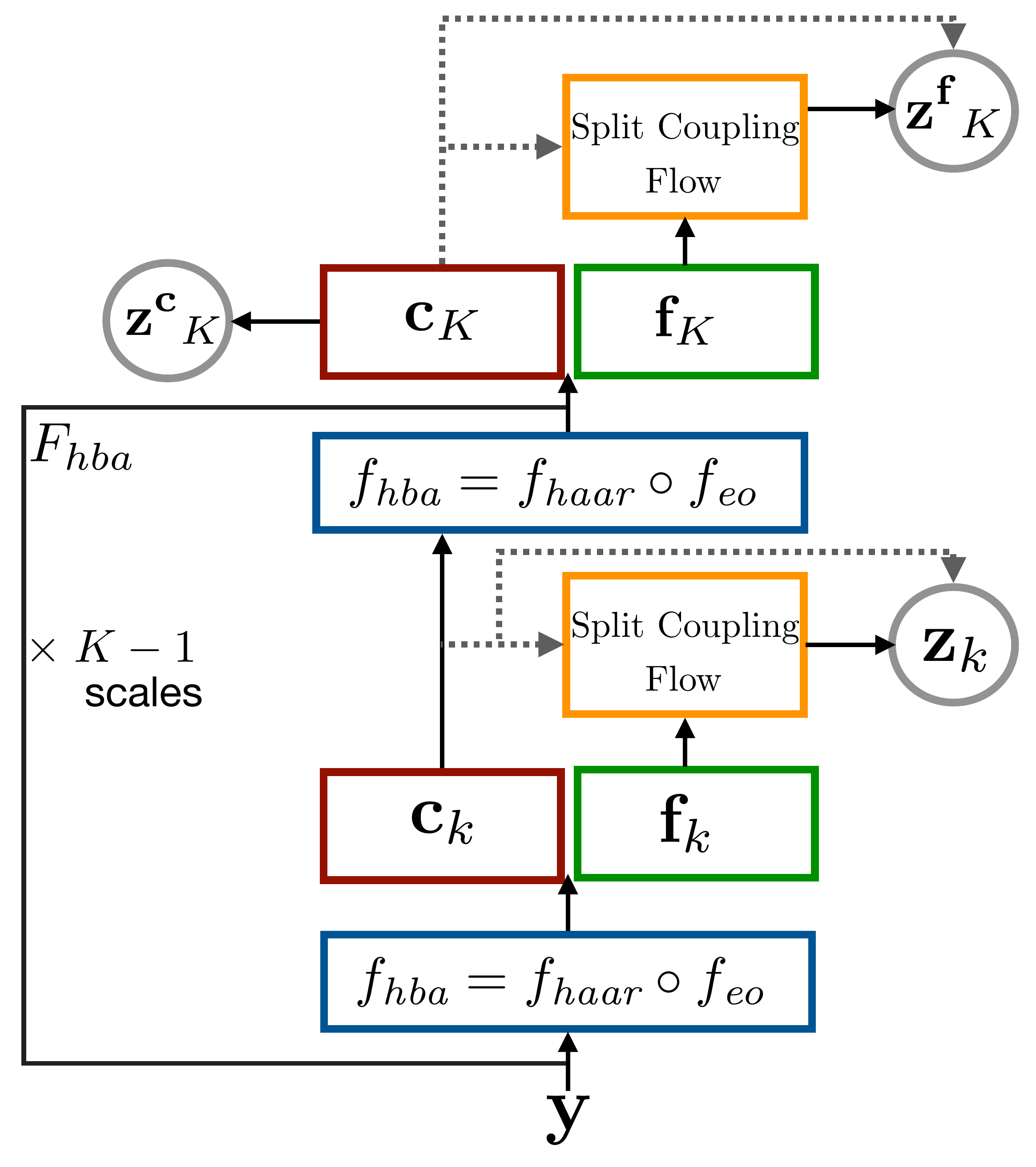}
\end{subfigure}
\caption{Left: \emph{HBA-Flow} generative model with the Haar wavelet \cite{haar1910haar} based representation $F_\textit{hba}$. Right: Our multi-scale \emph{HBA-Flow} model with $K$ scales of Haar based transformation.}\label{fig:archi}
\end{figure}

\subsection{Haar Wavelet based Invertible Transform}
Haar wavelet transform allows for a simple and easy to compute coarse-to-fine frequency decomposed representation with a finite number of components unlike alternatives \eg Fourier transformations \cite{porwik2004haar}. In our HBA-Flow  framework, we construct a transformation $F_\textit{hba}$ comprising of mappings $f_\textit{hba}$ recursively applied across $K$ scales. With this transformation, trajectories can be encoded at different levels of granularity along the temporal dimension.
We now formalize invertible function $f_\textit{hba}$ and its multi-scale Haar wavelet based  composition $F_\textit{hba}$.

\myparagraph{Single Scale Invertible Transformation.}
Consider the trajectory at scale $k$ as $\mathbf{y}_k = [{\mathbf{y}}^1_k,\cdots,{\mathbf{y}}^{T_k}_k ]$, where $T_k$ is the number of timesteps of trajectory $\mathbf{y}_k$. Here, at scale $k=1$, $\mathbf{y_1}= \mathbf{y}$ is the input trajectory.
Each element of the trajectory is a vector, ${\mathbf{y}}^j_k \in \mathbb{R}^d$ encoding spatial information of the traffic participant. 
Our proposed invertible transformation $f_{hba}$ at any scale $k$ is a composition, $f_\textit{hba} = f_\textit{haar} \circ f_\textit{eo}$.
First,  $f_{eo}$ transforms the trajectory into even $(\mathbf{e}_k)$ and odd $(\mathbf{o}_k)$ downsampled trajectories,

\begin{equation}\label{eq:htr1}
\begin{aligned}
f_{eo}(\mathbf{y}_k) = \mathbf{e}_k, \mathbf{o}_k  \,\, \text{where}, \, \mathbf{e}_k = [{\mathbf{y}}^2_k, \cdots,{\mathbf{y}}^{T_k}_k ] \,\, \text{and} \,\,  \mathbf{o}_k = [{\mathbf{y}}^1_k, \cdots,{\mathbf{y}}^{T_k-1}_k ].
\end{aligned}
\end{equation}

Next, $f_{haar}$ takes as input the even $(\mathbf{e}_k)$ and odd $(\mathbf{o}_k)$ downsampled trajectories and transforms them into coarse ($\mathbf{c}_k$) and fine ($\mathbf{f}_k$) downsampled trajectories using a scalar ``mixing'' parameter $\alpha$. In detail,

\begin{equation}\label{eq:htr2}
\begin{aligned}
f_\textit{haar}( \mathbf{e}_k, \mathbf{o}_k ) = \mathbf{f}_k, \mathbf{c}_k  \,\, \text{where}, \, &\mathbf{c}_k = (1 - \alpha) \mathbf{e}_k + 
\alpha \mathbf{o}_k  \,\,\,\,\, \text{and} \\ 
& \mathbf{f}_k = \mathbf{o}_k - \mathbf{c}_k =  (1 - \alpha) \mathbf{o}_k +  (\alpha - 1) \mathbf{e}_k
\end{aligned}
\end{equation}

where, the coarse ($\mathbf{c}_k$) trajectory is the element-wise weighted average of the even $(\mathbf{e}_k)$ and odd $(\mathbf{o}_k)$ downsampled trajectories and the fine ($\mathbf{f}_k$) trajectory is the element-wise difference to the coarse downsampled trajectory. 
The coarse trajectories ($\mathbf{c}_k$) provide global context for finer scales in our block autoregressive approach, while the fine trajectories ($\mathbf{f}_k$) encode details at multiple scales.
We now discuss the invertibilty of this transformation $f_{hba}$ and compute the Jacobian.
\begin{lemma}\label{lemma1}
The generalized Haar transformation $f_\textit{hba} = f_\textit{haar} \circ f_\textit{eo}$ is invertible for $\alpha \in [0,1)$ and the determinant of the Jacobian of the transformation $f_\textit{hba} = f_\textit{haar} \circ f_{eo}$ for sequence of length $T_{k}$ with ${\mathbf{y}}^j_k \in \mathbb{R}^d$ is  $\det  \mathbf{J}_\textit{hba} = (1 - \alpha)^{\nicefrac{(d \cdot T_k)}{2}}$. 
\end{lemma}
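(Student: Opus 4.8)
The plan is to factor both claims through the two constituent maps and use multiplicativity of determinants under composition, $\det \mathbf{J}_\textit{hba} = \det \mathbf{J}_\textit{haar} \cdot \det \mathbf{J}_\textit{eo}$. The strategy is: show $f_\textit{eo}$ is a pure relabeling contributing a unit factor, reduce $f_\textit{haar}$ to a block-diagonal of identical $2\times 2$ blocks, and compute the determinant of a single block.

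First I would observe that $f_\textit{eo}$ in \cref{eq:htr1} merely reindexes the $T_k$ vector-valued entries of $\mathbf{y}_k$ into their even- and odd-indexed subsequences; it permutes coordinates without altering any values. Hence its Jacobian is a permutation matrix on the $d \cdot T_k$ scalar coordinates, so $\lvert\det \mathbf{J}_\textit{eo}\rvert = 1$, and $f_\textit{eo}$ is trivially invertible by interleaving the two halves back. This step contributes only a unit factor (up to a sign that is immaterial for the change-of-variables formula, which uses $\lvert\det\rvert$).

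The substance lies in $f_\textit{haar}$. The key observation is that the map in \cref{eq:htr2} acts identically and independently on each of the $d \cdot T_k / 2$ paired scalar coordinates: writing the output as $(f,c)$, a single pair $(e,o)$ is sent to $(f,c)$ by the fixed linear map with matrix $M = \left(\begin{smallmatrix} -(1-\alpha) & 1-\alpha \\ 1-\alpha & \alpha \end{smallmatrix}\right)$. After regrouping coordinates into these pairs, $\mathbf{J}_\textit{haar}$ is block diagonal with $d \cdot T_k / 2$ identical copies of $M$, so $\det \mathbf{J}_\textit{haar} = (\det M)^{(d \cdot T_k)/2}$. A one-line computation gives $\det M = -(1-\alpha)\alpha - (1-\alpha)^2 = -(1-\alpha)$, whence $\lvert\det M\rvert = 1-\alpha$ and therefore $\lvert\det \mathbf{J}_\textit{hba}\rvert = (1-\alpha)^{(d \cdot T_k)/2}$, as stated.

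For invertibility I would note that $\det M = 0$ precisely when $\alpha = 1$, so $M$, and hence $f_\textit{haar}$, is invertible exactly for $\alpha \ne 1$; restricting to the convex-combination regime $\alpha \in [0,1)$ therefore guarantees invertibility. Concretely the inverse reads $\mathbf{o}_k = \mathbf{c}_k + \mathbf{f}_k$ followed by $\mathbf{e}_k = (\mathbf{c}_k - \alpha\,\mathbf{o}_k)/(1-\alpha)$, which is well defined exactly when $1-\alpha \ne 0$. The only point requiring care is the bookkeeping of the two coordinate permutations, namely the even-odd split and the subsequent regrouping into pairs, needed to expose the block-diagonal structure; once that is in place the determinant is just the product of the $2\times 2$ blocks and the result follows immediately.
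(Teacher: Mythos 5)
Your proof is correct and follows essentially the same route as the paper's: both reduce the problem to a block-diagonal Jacobian of $\nicefrac{(d\cdot T_k)}{2}$ identical $2\times 2$ blocks and read off the determinant as a power of the block determinant $\pm(1-\alpha)$, with invertibility following from this being nonzero for $\alpha\in[0,1)$. The only cosmetic difference is that you factor the determinant through $f_\textit{eo}$ and $f_\textit{haar}$ separately and order each block's inputs as $(e,o)$, picking up a sign of $-(1-\alpha)$ per block where the paper's interleaved ordering gives $+(1-\alpha)$; as you note, this sign is irrelevant for the change-of-variables formula, so both computations agree on $\lvert\det \mathbf{J}_\textit{hba}\rvert = (1-\alpha)^{\nicefrac{(d\cdot T_k)}{2}}$.
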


We provide the proof in Appendix A.
This property allows our {HBA-Flow} model to exploit $f_\textit{hba}$ for spatio-temporal decomposition of the trajectories $\mathbf{y}$ while remaining invertible with a tractable Jacobian for exact inference. Next, we use this transformation $f_\textit{hba}$ to build the coarse-to-fine multi-scale Haar wavelet based transformation $F_\textit{hba}$ and discuss its properties. 

\myparagraph{Multi-scale Haar Wavelet based Transformation.} 
To construct our generalized Haar wavelet based transformation $F_\textit{hba}$, the mapping $f_{hba}$ is applied recursively at $K$ scales (\cref{fig:archi}, left). 
The transformation $f_{hba}$ at a scale $k$ applies a low and a high pass filter pair on the input trajectory $\mathbf{y}_k$ resulting in the coarse trajectory $\mathbf{c}_k$ and the fine trajectory $\mathbf{f}_k$ with high frequency details.
The coarse (spatially and temporally sub-sampled) trajectory ($\mathbf{c}_k$) at scale $k$ is then further decomposed by using it as the input trajectory $\mathbf{y}_{k+1} = \mathbf{c}_k$ to $f_{hba}$ at scale $k+1$. This is repeated at $K$ scales, resulting in the complete Haar wavelet transformation $F_{hba}(\mathbf{y}) = [\mathbf{f}_1, \cdots, \mathbf{f}_K, \mathbf{c}_K ]$ which captures details at multiple ($K$) spatio-temporal scales. The finest scale $\mathbf{f}_1$ models high-frequency spatio-temporal information of the trajectory $\mathbf{y}$. 
The subsequent scales $\mathbf{f}_k$ represent details at coarser levels, with $\mathbf{c}_K$ being the coarsest transformation which expresses the ``high-level'' spatio-temporal structure of the trajectory (\cref{fig:teaser}).

Next, we show that the number of scales $K$ in $F_{hba}$ is upper bounded by the logarithm of the length of the sequence.
This implies that $F_{hba}$, when integrated in the multi-scale block auto-regressive model provides a computationally efficient setup for generating trajectories. 
\begin{lemma}\label{lemma3}
The number of scales $K$ of the Haar wavelet based representation $F_{hba}$ is $K \leq \log(T_1)$, for an initial input sequence $\mathbf{y}_1$ of length $T_1$.
\end{lemma}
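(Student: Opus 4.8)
The plan is to track how the sequence length evolves across the $K$ scales and show that it contracts geometrically, so that only logarithmically many scales are possible. The key observation is that the even--odd downsampling $f_{eo}$ in \eqref{eq:htr1} splits the length-$T_k$ trajectory $\mathbf{y}_k$ into two subsequences $\mathbf{e}_k,\mathbf{o}_k$ each of length $\lfloor T_k/2 \rfloor$, and the subsequent Haar mixing $f_\textit{haar}$ in \eqref{eq:htr2} is elementwise and hence length-preserving, so the coarse trajectory $\mathbf{c}_k$ again has length $\lfloor T_k/2 \rfloor$. Since $\mathbf{c}_k$ is fed in as the next input $\mathbf{y}_{k+1}=\mathbf{c}_k$, this yields the length recurrence $T_{k+1} = \lfloor T_k/2 \rfloor$. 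As a consistency check, this is exactly the number of repeating $2\times 2$ blocks that produce the Jacobian determinant $(1-\alpha)^{(d\cdot T_k)/2}$ in \cref{lemma1}, whose exponent $\nicefrac{(d\cdot T_k)}{2}$ counts precisely the coarse/fine pairs at scale $k$.

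First I would unroll this recurrence: iterating floor-halving gives $T_k = \lfloor T_1/2^{\,k-1}\rfloor$ at every scale $k$ at which $f_\textit{hba}$ is applied. Next I would pin down the termination condition, namely that $f_\textit{hba}$ can be applied at scale $k$ only when $\mathbf{y}_k$ has at least two entries, \ie $T_k \geq 2$, since fewer than two samples admit no even--odd pair to average and mix. Hence, if $K$ denotes the deepest scale actually reached, the transformation is applied at scale $K$, which forces $T_K \geq 2$. Combining the two facts, $T_K = \lfloor T_1/2^{\,K-1}\rfloor \geq 2$ implies $T_1/2^{\,K-1} \geq 2$, hence $T_1 \geq 2^{K}$, and taking $\log$ (base $2$) gives $K \leq \log(T_1)$, as claimed, with equality exactly when $T_1$ is a power of two.

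The only real subtlety I expect is the rounding in the length recurrence: one must check that the floor does not buy an extra scale relative to the clean power-of-two case. The argument above handles this cleanly because the termination inequality $T_K \geq 2$ is applied to the input of the final step, \ie \emph{before} the last halving, so the floor is never the binding constraint and the bound $K \leq \log(T_1)$ holds for every initial length $T_1$. (Had the downsampling instead rounded up, one would only obtain $K \leq \lceil \log(T_1)\rceil$; it is precisely the conservative $\lfloor T_k/2\rfloor$ contraction, consistent with the Jacobian count above, that delivers the stated bound.)
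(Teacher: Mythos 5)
Your proof is correct and takes essentially the same route as the paper's: the transformation halves the sequence length at each scale, and requiring the final scale to still be well-defined (you use $T_K \ge 2$ on the input to the last step; the paper equivalently uses $\lvert \mathbf{c}_K\rvert = T_1/2^K \ge 1$ on its output) gives $T_1 \ge 2^K$ and hence $K \le \log(T_1)$. Your extra care with the floor in the recurrence $T_{k+1}=\lfloor T_k/2\rfloor$ is a welcome refinement over the paper's exact-halving shorthand but does not change the argument.
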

\begin{proof}
The Haar wavelet based transformation $f_{hba}$ halves the length of trajectory $\mathbf{y}_k$ at each level $k$. Thus, for an initial input sequence $\mathbf{y}_1$ of length $T_1$, the length of the coarsest level $K$ in $F_{hba}(\mathbf{y})$ is $\lvert \mathbf{c}_K \lvert = \nicefrac{T_1}{2^K} \geq 1$. Thus, $K \leq \log(T_1)$.
\end{proof}

\subsection{Haar Block Autoregressive Framework}
\label{section:HBAmodel}
\myparagraph{{HBA-Flow} model.} We illustrate our {HBA-Flow} model in \cref{fig:archi}. 
Our HBA-Flow model first transforms the trajectories $\mathbf{y}$ using $F_\textit{hba}$, where the invertible transform $f_{hba}$ is recursively applied on the input trajectory $\mathbf{y}$ to obtain $\mathbf{f}_k$ and $\mathbf{c}_k$ at scales $k \in \{1,\cdots,K\}$. Therefore, the log-likelihood of a trajectory $\mathbf{y}$ under our {HBA-Flow} model can be expressed using the change of variables formula as,

\begin{equation}
\begin{aligned}
\log(p_{\theta}(\mathbf{y} | \mathbf{x})) &= \log(p_{\theta}(\mathbf{f}_1,\mathbf{c}_1 | \mathbf{x})) + \log \;\lvert \det \left(\mathbf{J}_\textit{hba}\right)_1 \lvert\\
&= \log(p_{\theta}(\mathbf{f}_1,\cdots,\mathbf{f}_K,\mathbf{c}_K | \mathbf{x})) + \sum\limits_{i=1}^{K} \log \;\lvert\det \left(\mathbf{J}_\textit{hba}\right)_i \lvert.
\end{aligned}
\end{equation}

Next, our HBA-Flow model factorizes the distribution of fine trajectories \owlog such that $\mathbf{f}_k$ at level $k$ is conditionally dependent on the representations at scales $k+1$ to $K$,

\begin{equation}\label{eq:bafac}
\begin{aligned}
\log(p_{\theta}(\mathbf{f}_1,\cdots,\mathbf{f}_K,\mathbf{c}_K | \mathbf{x}))
&= \log(p_{\theta}( \mathbf{f}_1 | \mathbf{f}_2,\cdots,\mathbf{f}_K,\mathbf{c}_K, \mathbf{x})) + \cdots \\ &+  \log(p_{\theta}( \mathbf{f}_K | \mathbf{c}_K, \mathbf{x})) + \log( p_{\theta}(\mathbf{c}_K | \mathbf{x})).
\end{aligned}
\end{equation}

Finally, note that $[\mathbf{f}_{k+1},\cdots,\mathbf{f}_{K},\mathbf{c}_K]$ is the output of the (bijective) transformation $F_{\textit{hba}}(\mathbf{c}_{k})$ where $f_\textit{hba}$ is recursively applied to  $\mathbf{c}_{k}=\mathbf{y}_{k+1}$ at scales $\{k+1, \cdots ,K\}$. Thus HBA-Flow equivalently models $p_{\theta}( \mathbf{f}_k | \mathbf{f}_{k+1},\cdots,\mathbf{c}_K, \mathbf{x})$ as $p_{\theta}( \mathbf{f}_k | \mathbf{c}_{k}, \mathbf{x})$,

\begin{equation}\label{eq:cafac}
\begin{aligned}
\log(p_{\theta}(\mathbf{y} | \mathbf{x})) =& \log(p_{\theta}( \mathbf{f}_1 | \mathbf{c}_1, \mathbf{x})) + \cdots +  \log(p_{\theta}( \mathbf{f}_K | \mathbf{c}_K, \mathbf{x})) \\
&+ \log( p_{\theta}(\mathbf{c}_K | \mathbf{x})) + \sum\limits_{i=1}^{K} \log \; \lvert \det \left(\mathbf{J}_\textit{hba}\right)_i \lvert.
\end{aligned}
\end{equation}

Therefore, as illustrated in \cref{fig:archi} (right), our {HBA-Flow} models the distribution of each of the fine components $\mathbf{f}_k$ block autoregressively conditioned on the coarse representation $\mathbf{c}_k$ at that level. 
The distribution $p_{\theta}( \mathbf{f}_k | \mathbf{c}_k, \mathbf{x} )$ at each scale $k$ is modeled using invertible conditional split coupling flows (\cref{fig:archi}, right) \cite{kim2019flowavenet}, which transform the input distribution to the distribution over latent ``priors'' $\mathbf{z}_k$.
This enables our framework to model variable length trajectories.
The log-likelihood with our HBA-Flow approach can be expressed using the change of variables formula as,

\begin{equation}\label{eq:scfflow}
\begin{aligned}
    \log(p_{\theta}( \mathbf{f}_k | \mathbf{c}_k, \mathbf{x})) = \log(p_{\phi}( \mathbf{z}_k | \mathbf{c}_k, \mathbf{x})) + \log \; \lvert\det (\mathbf{J}_\textit{sc})_k \lvert
\end{aligned}
\end{equation}

where, $\log \; \lvert\det (\mathbf{J}_\textit{sc})_k \lvert$ is the log determinant of Jacobian $(\mathbf{J}_\textit{sc})_k$ of the split coupling flow at level $k$. Thus, the likelihood of a trajectory $\mathbf{y}$ under our HBA-Flow model can be expressed exactly using \cref{eq:cafac,eq:scfflow}. 

The key advantage of our approach is that after spatial and temporal downsampling of coarse scales, it is easier to model long-term spatio-temporal dependencies. Moreover, conditioning the flows at each scale on the coarse trajectory provides global context as the downsampled coarse trajectory effectively increases the spatio-temporal receptive field. This enables our HBA-Flows better capture multimodality in the distribution of likely future trajectories.

\myparagraph{HBA-Prior.}
Complex multimodel priors can considerably increase the modeling flexibility of generative models \cite{bhattacharyya2019conditional,kim2019flowavenet,kumar2019videoflow}. The block autoregressive structure of our HBA-Flow model allows us introduce a Haar block autoregressive prior (HBA-Prior) over $\mathbf{z} = [ \mathbf{z}_1, \cdots, \mathbf{z^f}_K, \mathbf{z^c}_K ]$ in \cref{eq:scfflow}, where $\mathbf{z}_k$ is the latent representation for scales $k \in \{ 1, \cdots, K-1 \}$ and  $\mathbf{z^f}_K, \mathbf{z^c}_K$ are the latents for the coarse and fine representations scales $K$.
The log-likelihood of the prior factorizes as,

\begin{equation}\label{eq:prior1}
\begin{aligned}
\log(p_{\phi}(\mathbf{z}| \mathbf{x})) = \log(p_{\phi}( \mathbf{z}_1 | \mathbf{z}_2, \cdots, \mathbf{z^f}_K,\mathbf{z^c}_K, \mathbf{x})) + \cdots \\ +  \log(p_{\phi}( \mathbf{z^f}_{K} | \mathbf{z^c}_K, \mathbf{x})) + \log( p_{\phi}(\mathbf{z^c}_K | \mathbf{x})).
\end{aligned}
\end{equation}

Each coarse level representation $\mathbf{c}_k$ is the output of a bijective  transformation of the latent variables $[ \mathbf{z}_{k+1},\cdots,\mathbf{z^f}_{K}\,\mathbf{z^c}_{K}]$ through the invertible split coupling flows and the transformations $f_{hba}$ at scales $\{ {k+1},\cdots,{K}\}$. 
Thus, HBA-Prior models $p_{\phi}( \mathbf{z}_k | \mathbf{z}_{k+1}, \cdots, \mathbf{z^f}_K,\mathbf{z^c}_K, \mathbf{x})$ as $p_{\phi}( \mathbf{z}_k | \mathbf{c}_k,\mathbf{x})$ at every scale (\cref{fig:archi}, left). 
The log-likelihood of the prior can also be expressed as,

\begin{equation}\label{eq:prior2}
\begin{aligned}
\log(p_{\phi}(\mathbf{z}| \mathbf{x})) = \log(p_{\phi}( \mathbf{z}_1 | \mathbf{c}_1, \mathbf{x})) + \cdots +  \log(p_{\phi}( \mathbf{z}_{K-1} | \mathbf{c}_{K-1}, \mathbf{x})) \\+ \log(p_{\phi}( \mathbf{z^f}_{K} | \mathbf{c}_K, \mathbf{x})) + \log( p_{\phi}(\mathbf{z^c}_K | \mathbf{x})). 
\end{aligned}
\end{equation}

We model $p_{\phi}( \mathbf{z}_k | \mathbf{c}_{k}, \mathbf{x})$ as conditional normal distributions which are multimodal as a result of the block autoregressive structure. 
In comparison to the fully autoregressive prior in \cite{kumar2019videoflow}, our {HBA-Prior} is efficient as it requires only $\mathcal{O}(\log(T_1))$ sampling steps. 

\myparagraph{Analysis of Sampling Time.} From \cref{eq:bafac} and \cref{fig:archi} (left), our HBA-Flow model autoregressively factorizes across the fine components $\mathbf{f}_k$ at $K$ scales. From \cref{lemma3}, $K \leq \log(T_1)$. At each scale our HBA-Flow samples the fine components $\mathbf{f}_k$ using split coupling flows, which are easy to parallelize. Thus, given enough parallel resources, our {HBA-Flow} model requires maximum $K \leq \log(T_1)$ \ie{} $\mathcal{O}(\log(T_1))$ sampling steps and is significantly more efficient compared to fully autoregressive approaches \eg{}VideoFlow \cite{kumar2019videoflow}, which require $\mathcal{O}(T_1)$ steps.

\section{Experiments}
We evaluate our approach for trajectory prediction on two challenging real world datasets -- Stanford Drone  \cite{robicquet2016learning} and Intersection Drone \cite{inDdataset}. 
These datasets contain trajectories of traffic participants including pedestrians, bicycles, cars recorded from an aerial platform. 
The distribution of likely future trajectories is highly multimodal due to the complexity of the traffic scenarios \eg at intersections.

\myparagraph{Evaluation Metrics.} We are primarily interested in measuring the match of the learned distribution to the true distribution. Therefore, we follow \cite{bhattacharyya2019conditional,bhattacharyya2018accurate,lee2017desire,pajouheshgar2018back} and use Euclidean error of the top 10\% of samples (predictions) and the (negative) conditional log-likelihood (-CLL) metrics.
The Euclidean error of the top 10\% of samples measures the coverage of all modes of the target distribution and is relatively robust to random guessing as shown in \cite{bhattacharyya2019conditional}.  

\myparagraph{Architecture Details.} We provide additional architecture details in Appendix B.

\begin{table*}[!t]
\centering
\resizebox{\textwidth}{!}{
\begin{tabular}{lc@{\hskip 0.2cm}c@{\hskip 0.2cm}c@{\hskip 0.2cm}c@{\hskip 0.2cm}c@{\hskip 0.2cm}c@{\hskip 0.2cm}c}
\toprule
Method & Visual & Er $@$ 1sec & Er $@$ 2sec & Er $@$ 3sec &  Er $@$ 4sec & -CLL & Speed \\
\midrule
``Shotgun'' \cite{pajouheshgar2018back} &  -- & 0.7 & 1.7 &  3.0 &  4.5 & 91.6 & -- \\
DESIRE-SI-IT4 \cite{lee2017desire}  & \checkmark & 1.2 & 2.3 & 3.4 & 5.3 & -- & -- \\ 
STCNN \cite{pajouheshgar2018back} & \checkmark & 1.2   & 2.1  & 3.3 &  4.6 & -- &  -- \\ 
BMS-CVAE \cite{bhattacharyya2018accurate} & \checkmark & 0.8  & 1.7 & 3.1 & 4.6 & 126.6 & 58  \\ 
CF-VAE \cite{bhattacharyya2019conditional} & --  & \textbf{0.7} & 1.5 & 2.5 & 3.6 & 84.6 & 47 \\
CF-VAE \cite{bhattacharyya2019conditional} & \checkmark  & \textbf{0.7} & 1.5 & 2.4 & 3.5 & 84.1 & 88 \\
\midrule
Auto-regressive \cite{kumar2019videoflow} & -- & \textbf{0.7}  & 1.5 & 2.6 & 3.7 & 86.8 & 134 \\
FloWaveNet \cite{kim2019flowavenet} & -- & \textbf{0.7}  & 1.5 & 2.5 & 3.6 & 84.5 & \textbf{38} \\
FloWaveNet \cite{kim2019flowavenet} + HWD \cite{ardizzone2019guided}  & -- & \textbf{0.7}  & 1.5 & 2.5 & 3.6 & 84.4 & \textbf{38} \\
FloWaveNet \cite{kim2019flowavenet} & \checkmark & \textbf{0.7}  & 1.5 & 2.4 & 3.5 & 84.1 & 77 \\
\midrule
{HBA-Flow} (Ours) & -- & \textbf{0.7}  & 1.5 & 2.4 & 3.4 & 84.1 & 41  \\
{HBA-Flow} + Prior (Ours) & -- & \textbf{0.7}  & \textbf{1.4} & \textbf{2.3} & 3.3 & 83.4 & 43 \\
{HBA-Flow} + Prior (Ours) & \checkmark & \textbf{0.7}  & \textbf{1.4} & \textbf{2.3} & \textbf{3.2} & \textbf{83.1} & 81  \\

\bottomrule
\end{tabular}}
\caption{Five fold cross validation on the Stanford Drone dataset. Lower is better for all metrics. Visual refers to additional conditioning on the last observed frame. Top: state of the art, Middle: Baselines and ablations, Bottom: Our HBA-Flow.}
\label{tab:stanford_drone_cross}
\end{table*}

\begin{figure}
\centering
\scriptsize
\begin{tabular}{cccc} 

\toprule
Observed & \shortstack{Mean Top 10\%\\ \textcolor[HTML]{4363D8}{B} - GT, \textcolor[HTML]{FFE119}{Y} -\cite{kim2019flowavenet}, \textcolor[HTML]{E6194B}{R} - Ours}   &  \shortstack{FloWaveNet
\cite{kim2019flowavenet}\\ Predictions} & \shortstack{\emph{HBA-Flows} (Ours)\\Predictions} \\
\midrule
\includegraphics[width=0.2\linewidth]{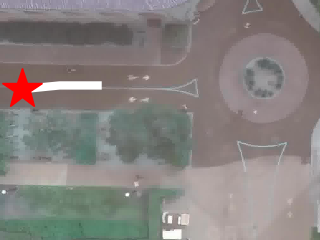} &
\includegraphics[width=0.2\linewidth]{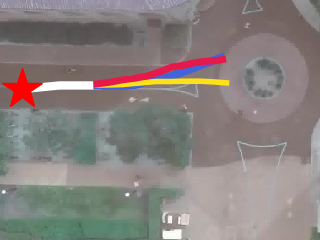} &
\includegraphics[width=0.2\linewidth]{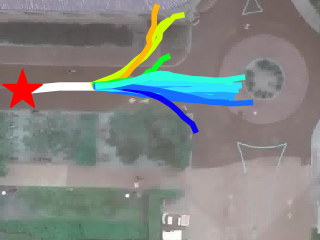} &
\includegraphics[width=0.2\linewidth]{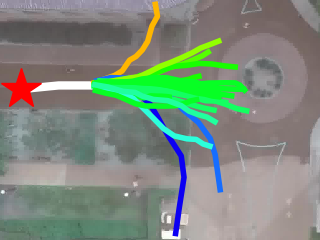}\\

\includegraphics[width=0.2\linewidth]{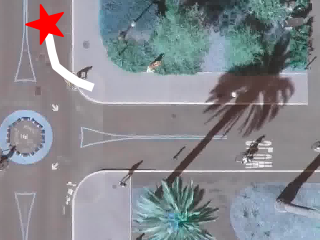} &
\includegraphics[width=0.2\linewidth]{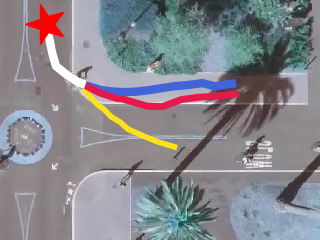} &
\includegraphics[width=0.2\linewidth]{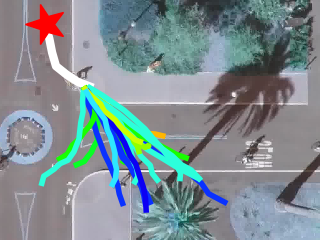} &
\includegraphics[width=0.2\linewidth]{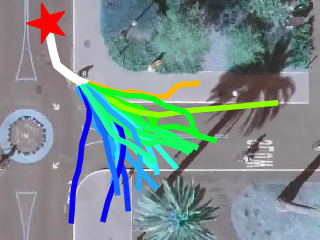}\\

\includegraphics[width=0.2\linewidth]{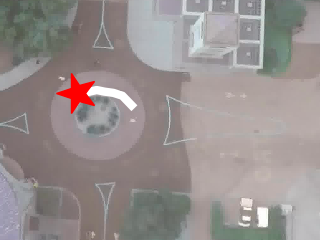} &
\includegraphics[width=0.2\linewidth]{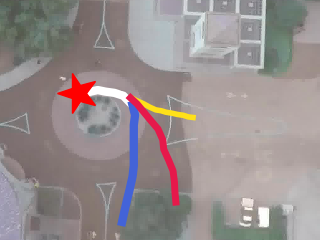} &
\includegraphics[width=0.2\linewidth]{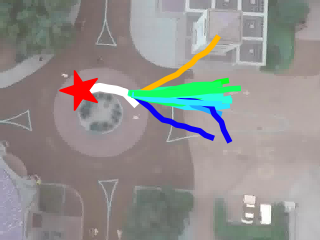} &
\includegraphics[width=0.2\linewidth]{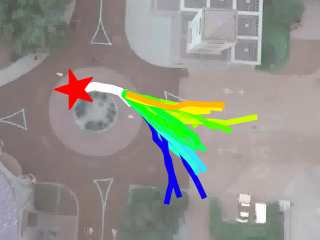}\\

\bottomrule

\end{tabular}
\caption{Mean top 10\% predictions (\textcolor[HTML]{4363D8}{Blue} - Groudtruth, \textcolor[HTML]{FFE119}{Yellow} - FloWaveNet \cite{kim2019flowavenet}, \textcolor[HTML]{E6194B}{Red} - Our \emph{HBA-Flow} model) and predictive distributions on Intersection Drone dataset.
The predictions of our {HBA-Flow} model are more diverse and better capture the multimodality the future trajectory distribution.}
\label{fig:stanford_drone_dist}
\end{figure}

\subsection{Stanford Drone}
We use the standard five-fold cross validation evaluation protocol  \cite{bhattacharyya2019conditional,bhattacharyya2018accurate,lee2017desire,pajouheshgar2018back} and predict the trajectory up to 4 seconds into the future.
We use the Euclidean error of the top 10\% of predicted trajectories at the standard ($\nicefrac{1}{5}$) resolution using 50 samples and the CLL metric in \cref{tab:stanford_drone_cross}. We additionally report sampling time for a batch of 128 samples in milliseconds.

We compare our {HBA-Flow} model to the following state-of-the-art models:
The handcrafted ``Shotgun'' model \cite{pajouheshgar2018back}, the conditional VAE based models of \cite{bhattacharyya2018accurate,bhattacharyya2019conditional,lee2017desire} and the autoregressive STCNN model \cite{pajouheshgar2018back}.
We additionally include the various exact inference baselines for modeling trajectory sequences: the autoregressive flow model of VideoFlow \cite{kumar2019videoflow}, FloWaveNet \cite{kim2019flowavenet} (without our Haar wavelet based block autoregressive structure), FloWaveNet \cite{kim2019flowavenet} with the Haar wavelet downsampling of \cite{ardizzone2019guided} (FloWaveNet + HWD), our {HBA-Flow} model with a Gaussian prior (without our {HBA-Prior}).
The FloWaveNet \cite{kim2019flowavenet} baselines serves as ideal ablations to measure the effectiveness of our block autoregressive HBA-Flow model. For fair comparison, we use two scales (levels) $K=2$ with eight non-linear squared split coupling flows \cite{ziegler2019latent} each, for both our {HBA-Flow} and FloWaveNet \cite{kim2019flowavenet} models. 
Following \cite{bhattacharyya2019conditional,pajouheshgar2018back} we additionally experiment with conditioning on the last observed frame using a attention based CNN (indicated by ``Visual'' in \cref{tab:stanford_drone_cross}).

\begin{wraptable}[10]{r}{0.48\textwidth}
\centering
\scriptsize
\vspace{-2.0\baselineskip}
    \begin{tabular}{lcc}
    \toprule
    Method & mADE $\downarrow$ & mFDE $\downarrow$ \\
    \midrule
    SocialGAN \cite{gupta2018social} & 27.2 & 41.4\\
    MATF GAN \cite{zhao2019multi} & 22.5 & 33.5 \\
    SoPhie \cite{sadeghian2018sophie} & 16.2 &  29.3\\
    Goal Prediction \cite{deo2019scene} &  15.7 & 28.1 \\
    CF-VAE \cite{bhattacharyya2019conditional} & {12.6} & {22.3} \\
    \midrule
    HBA-Flow + Prior (Ours) & \textbf{10.8} & \textbf{19.8} \\
    \bottomrule
    \end{tabular}
\caption{Evaluation on the Stanford Drone using the split of \cite{deo2019scene,sadeghian2018sophie,zhao2019multi}.}
\label{tab:stanford_drone_stand}
\end{wraptable}
We observe from \cref{tab:stanford_drone_cross} that our {HBA-Flow} model outperforms both state-of-the-art models and baselines. In particular, our {HBA-Flow} model outperforms the conditional VAE based models of \cite{bhattacharyya2019conditional,bhattacharyya2018accurate,lee2017desire} in terms of Euclidean distance and -CLL. Further, our HBA-Flow exhibits competitive sampling speeds. 
This shows the advantage of exact inference in the context of generative modeling of trajectories -- leading to better match to the groundtruth distribution.
Our {HBA-Flow} model generates accurate trajectories compared to the VideoFlow \cite{kumar2019videoflow} baseline. This is because unlike VideoFlow, errors do not accumulate in the temporal dimension of HBA-Flow.
Our {HBA-Flow} model outperforms the FloWaveNet model of \cite{kim2019flowavenet} with comparable sampling speeds demonstrating the effectiveness of the coarse-to-fine block autoregressive structure of our HBA-Flow model in capturing long-range spatio-temporal dependencies. This is reflected in the predictive distributions and the top 10\% of predictions of our HBA-Flow model in comparison with FloWaveNet \cite{kim2019flowavenet} in \cref{fig:stanford_drone_dist}. The predictions of our {HBA-Flow} model are more diverse and can more effectively capture the multimodality of the trajectory distributions especially at complex traffic situations \eg intersections and crossings. We provide additional examples in Appendix C. We also observe in \cref{tab:stanford_drone_cross} that the addition of Haar wavelet downsampling \cite{ardizzone2019guided} to FloWaveNets \cite{kim2019flowavenet} (FloWaveNet + HWD) does not significantly improve performance. This illustrates that Haar wavelet downsampling as used in \cite{ardizzone2019guided} is not effective in case of sequential trajectory data as it is primarily a spatial pooling operation for image data. Finally, our ablations with Gaussian priors (HBA-Flow) additionally demonstrate the effectiveness of our HBA-Prior (HBA-Flow + Prior) with improvements with respect to accuracy.
We further include a comparison using the evaluation protocol of \cite{robicquet2016learning,sadeghian2018car,sadeghian2018sophie,deo2019scene} in \cref{tab:stanford_drone_stand}. Here, only a single train/test split is used. We follow \cite{bhattacharyya2019conditional,deo2019scene} and use the minimum average displacement error (mADE) and minimum final displacement error (mFDE) as evaluation metrics. Similar to \cite{bhattacharyya2019conditional,deo2019scene} the minimum is calculated over 20 samples. Our HBA-Flow model outperforms the state-of-the-art demonstrating the effectiveness of our approach.


\begin{figure}
\centering
\scriptsize
\begin{tabular}{cccc} 
\toprule
Observed & \shortstack{Mean Top 10\%\\ \textcolor[HTML]{4363D8}{B} - GT, \textcolor[HTML]{FFE119}{Y} -\cite{kim2019flowavenet}, \textcolor[HTML]{E6194B}{R} - Ours}   &  \shortstack{FloWaveNet
\cite{kim2019flowavenet}\\ Predictions} & \shortstack{\emph{HBA-Flow} (Ours)\\Predictions} \\
\midrule
\includegraphics[width=0.2\linewidth]{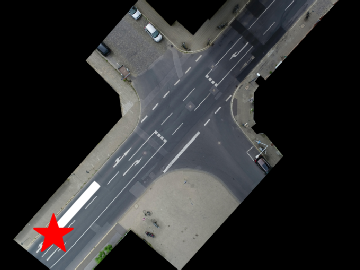} &
\includegraphics[width=0.2\linewidth]{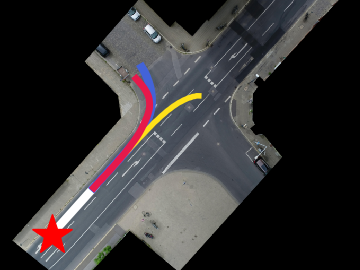} &
\includegraphics[width=0.2\linewidth]{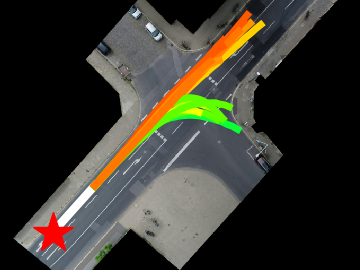} &
\includegraphics[width=0.2\linewidth]{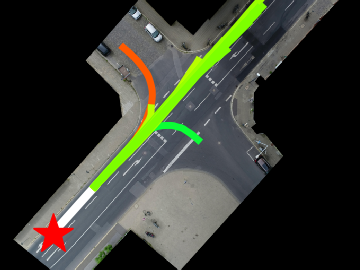}\\

\includegraphics[width=0.2\linewidth]{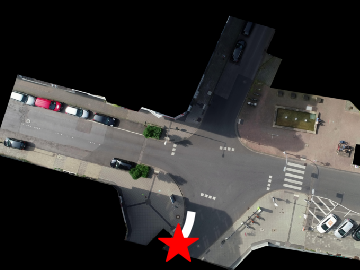} &
\includegraphics[width=0.2\linewidth]{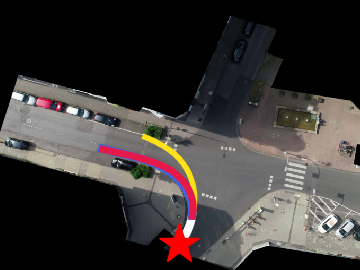} &
\includegraphics[width=0.2\linewidth]{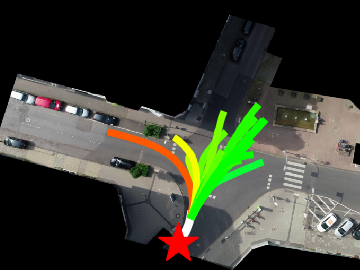} &
\includegraphics[width=0.2\linewidth]{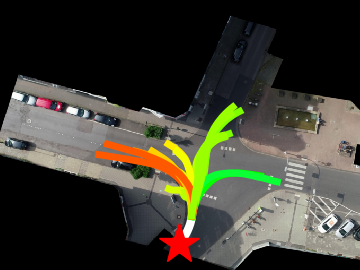}\\

\includegraphics[width=0.2\linewidth]{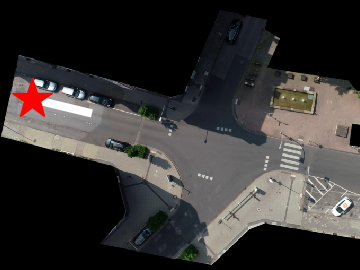} &
\includegraphics[width=0.2\linewidth]{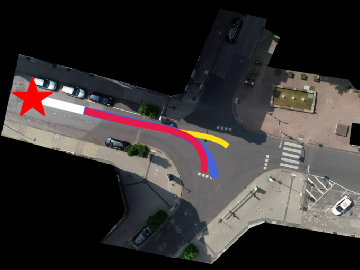} &
\includegraphics[width=0.2\linewidth]{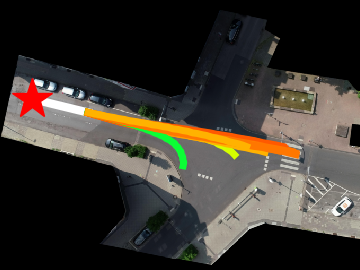} &
\includegraphics[width=0.2\linewidth]{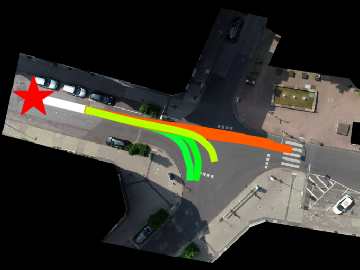}\\

\bottomrule

\end{tabular}
\caption{Mean top 10\% predictions (\textcolor[HTML]{4363D8}{Blue} - Groudtruth, \textcolor[HTML]{FFE119}{Yellow} - FloWaveNet \cite{kim2019flowavenet}, \textcolor[HTML]{E6194B}{Red} - Our \emph{HBA-Flow} model) and predictive distributions on Intersection Drone dataset.
The predictions of our HBA-Flow model are more diverse and better capture the modes of the future trajectory distribution.}
\label{fig:ind_dist}
\end{figure}

\subsection{Intersection Drone}

\begin{table*}[!t]
\centering
\resizebox{\textwidth}{!}{
\begin{tabular}{lc@{\hskip 0.2cm}c@{\hskip 0.2cm}c@{\hskip 0.2cm}c@{\hskip 0.2cm}c@{\hskip 0.2cm}c@{\hskip 0.2cm}c}
\toprule
Method & Er $@$ 1sec & Er $@$ 2sec & Er $@$ 3sec &  Er $@$ 4sec & Er $@$ 5sec & -CLL  \\
\midrule
BMS-CVAE \cite{bhattacharyya2018accurate} &  0.25 & 0.67 & 1.14 & 1.78 & 2.63 & 26.7 \\
CF-VAE \cite{bhattacharyya2019conditional} &  0.24 & 0.55 & 0.93 & 1.45 & 2.21 & 21.2 \\
\midrule
FloWaveNet \cite{kim2019flowavenet} & 0.23 & 0.50 & 0.85 & 1.31 & 1.99 & 19.8 \\
FloWaveNet \cite{kim2019flowavenet} + HWD \cite{ardizzone2019guided} & 0.23 & 0.50 & 0.84 & 1.29 & 1.96 & 19.5 \\
\midrule
{HBA-Flow} + Prior (Ours) & \textbf{0.19} & \textbf{0.44} & \textbf{0.82} & \textbf{1.21} & \textbf{1.74} & \textbf{17.3} \\

\bottomrule
\end{tabular}}
\caption{Five fold cross validation on the Intersection Drone dataset. }
\label{tab:ind_cross}
\end{table*}

We further include experiments on the Intersection Drone dataset \cite{inDdataset}. 
The dataset consists of trajectories of traffic participants recorded at German intersections.
In comparison to the Stanford Drone dataset, the trajectories in this dataset are typically longer. 
Moreover, unlike the Stanford Drone dataset which is recorded at a University Campus, this dataset covers more ``typical'' traffic situations. 
Here, we follow the same evaluation protocol as in Stanford Drone dataset and perform a five-fold cross validation and evaluate up to 5 seconds into the future.

We report the results in \cref{tab:ind_cross}. We use the strongest baselines from \cref{tab:stanford_drone_cross} for comparison to our {HBA-Flow} + Prior model (with our {HBA-Prior}), with three scales, each having eight non-linear squared split coupling flows \cite{ziegler2019latent}. 
For fair comparison, we compare with a FloWaveNet \cite{kim2019flowavenet} model with three levels and eight non-linear squared split coupling flows per level. We again observe that our {HBA-Flow} leads to much better improvement with respect to accuracy over the FloWaveNet \cite{kim2019flowavenet} model. Furthermore, the performance gap between HBA-Flow and FloWaveNet increases with longer time horizons.
This shows that our approach can better encode spatio-temporal correlations.
The qualitative examples in \cref{fig:ind_dist} from both models show that our {HBA-Flow} model generates diverse trajectories and can better capture the modes of the future trajectory distribution, thus demonstrating the advantage of the block autoregressive structure of our HBA-Flow model. 
We also see that our {HBA-Flow} model outperforms the CF-VAE model \cite{bhattacharyya2019conditional}, again illustrating the advantage of exact inference.

\section{Conclusion}
In this work, we presented a novel block autoregressive \emph{HBA-Flow} framework taking advantage of the representational power of autoregressive models and the efficiency of invertible split coupling flow models. 
Our approach can better represent the multimodal trajectory distributions capturing the long range spatio-temporal correlations.
Moreover, the block autoregressive structure of our approach provides for efficient $\mathcal{O}(\log(T))$ inference and sampling. 
We believe that accurate and computationally efficient invertible models that allow exact likelihood computations and efficient sampling present a promising direction of research of anticipation problems in autonomous systems.

\clearpage

\bibliographystyle{splncs03}
\bibliography{egbib}


\appendix
\clearpage

\section*{Appendix A. Additional Details of Lemma 1}
\subsection{Proof of Lemma 1}
\begin{lemma}\label{lemma1}
The generalized Haar transformation $f_\textit{hba} = f_\textit{haar} \circ f_\textit{eo}$ is invertible for $\alpha \in [0,1)$ and the determinant of the Jacobian of the transformation $f_\textit{hba} = f_\textit{haar} \circ f_\textit{eo}$ for sequence of length $T_{k}$ with ${\mathbf{y}}^j_k \in \mathbb{R}^d$ is  $\det  \mathbf{J}_\textit{hba} = (1 - \alpha)^{\nicefrac{(d \cdot T_k)}{2}}$. 
\end{lemma}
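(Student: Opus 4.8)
The plan is to exploit the linearity of the transformation and reduce the determinant computation to that of a single repeating $2 \times 2$ block. First I would observe that $f_{eo}$ is merely a reindexing of the trajectory components into even- and odd-indexed subsequences of length $\nicefrac{T_k}{2}$; as a permutation of the coordinates its Jacobian is a permutation matrix with determinant $\pm 1$, so it is trivially invertible and contributes nothing to the magnitude of $\det \mathbf{J}_{hba}$. The substance therefore lies entirely in the linear map $f_{haar}$.

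Next I would express each output pair in terms of the corresponding input pair. Using (\ref{eq:htr1}) and (\ref{eq:htr2}), the $j$-th coarse and fine components depend only on time steps $2j-1$ and $2j$ of $\mathbf{y}_k$, namely $\mathbf{c}^j_k = (1-\alpha)\,\mathbf{y}^{2j}_k + \alpha\,\mathbf{y}^{2j-1}_k$ and $\mathbf{f}^j_k = (1-\alpha)\,\mathbf{y}^{2j-1}_k + (\alpha-1)\,\mathbf{y}^{2j}_k$. These relations act identically and independently on each of the $d$ spatial coordinates, and distinct index pairs $j$ involve disjoint inputs. Hence, after ordering the inputs as consecutive pairs $(\mathbf{y}^{2j-1}_k, \mathbf{y}^{2j}_k)$ and the outputs correspondingly as $(\mathbf{f}^j_k, \mathbf{c}^j_k)$, the Jacobian $\mathbf{J}_{hba}$ is block diagonal with the repeated block

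\begin{align*}
\bar{\mathbf{J}}_{hba} = \begin{pmatrix} (1-\alpha) & (\alpha-1) \\ \alpha & (1-\alpha) \end{pmatrix}.
\end{align*}

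I would then compute $\det \bar{\mathbf{J}}_{hba} = (1-\alpha)^2 - \alpha(\alpha-1) = (1-\alpha)[(1-\alpha)+\alpha] = 1-\alpha$. Since the trajectory has $T_k$ time steps each carrying $d$ spatial dimensions, this block recurs exactly $\nicefrac{(d \cdot T_k)}{2}$ times, and multiplicativity of the determinant over a block-diagonal matrix gives $\det \mathbf{J}_{hba} = (1-\alpha)^{\nicefrac{(d \cdot T_k)}{2}}$. Finally, for $\alpha \in [0,1)$ we have $1-\alpha > 0$, so $\det \mathbf{J}_{hba} > 0$; the Jacobian is nonsingular, which establishes invertibility of $f_{haar}$ and hence of the composite $f_{hba}$.

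I do not expect a genuine obstacle, as the map is linear and the computation is exact. The only point needing care is the bookkeeping: verifying that the even--odd reindexing together with the paired ordering of outputs truly decouples the Jacobian into independent $2 \times 2$ blocks, with no cross terms between distinct pairs $j$ or between distinct spatial coordinates, so that the global determinant factorizes as a product of identical block determinants.
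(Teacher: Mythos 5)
Your proposal is correct and follows essentially the same route as the paper's proof: reduce $f_{eo}$ to a trivially invertible reindexing, pair up the inputs and outputs so that the Jacobian of $f_{haar}$ becomes block diagonal with the repeating $2\times 2$ block $\bigl(\begin{smallmatrix}(1-\alpha) & (\alpha-1)\\ \alpha & (1-\alpha)\end{smallmatrix}\bigr)$, and multiply the $\nicefrac{(d\cdot T_k)}{2}$ block determinants. The only (welcome) addition is that you spell out the block determinant computation $(1-\alpha)^2-\alpha(\alpha-1)=1-\alpha$, which the paper states without showing.
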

\begin{proof}
To compute the Jacobian of $f_{hba}$, note that each element of the output fine ($\mathbf{f}_k$) and coarse ($\mathbf{c}_k$) trajectories can be expressed in terms of the elements of the input trajectory $\mathbf{y}_k$. From Eqs.\ (3) and (4) in the main paper, the coarse ($\mathbf{c}_k$) trajectories at level $k$ can be expressed as,

\begin{equation}\label{eq:c}
\begin{aligned}
    \mathbf{c}_k &= (1 - \alpha) \mathbf{e}_k + 
\alpha \mathbf{o}_k\\
    &= (1 - \alpha) \cdot [{\mathbf{y}}^2_k, \cdots,{\mathbf{y}}^{T_k}_k ] + \alpha \cdot [{\mathbf{y}}^1_k, \cdots,{\mathbf{y}}^{T_k-1}_k ]\\ 
    &= [ \alpha {\mathbf{y}}^1_k + (1 - \alpha) {\mathbf{y}}^2_k, \alpha {\mathbf{y}}^3_k + (1 - \alpha) {\mathbf{y}}^4_k, \cdots,  \alpha {\mathbf{y}}^{T_k-1}_k + (1 - \alpha) {\mathbf{y}}^{T_k}_k  ].
\end{aligned}
\end{equation}

Similarly, the fine ($\mathbf{f}_k$) trajectories at level $k$ can be expressed as,

\begin{equation}\label{eq:f}
\begin{aligned}
    \mathbf{f}_k =& (1 - \alpha) \mathbf{o}_k +  (\alpha - 1) \mathbf{e}_k\\
    =& (1 - \alpha) \cdot [{\mathbf{y}}^1_k, \cdots,{\mathbf{y}}^{T_k-1}_k ] + (\alpha - 1) \cdot [{\mathbf{y}}^2_k, \cdots,{\mathbf{y}}^{T_k}_k ] \\ 
    =& [ (1 - \alpha) {\mathbf{y}}^1_k + (\alpha - 1) {\mathbf{y}}^2_k, (1 - \alpha) {\mathbf{y}}^3_k + (\alpha - 1) {\mathbf{y}}^4_k, \cdots, 
    \\  & (1 - \alpha) {\mathbf{y}}^{T_k-1}_k + (\alpha - 1) {\mathbf{y}}^{T_k}_k  ].
\end{aligned}
\end{equation}

We can now rearrange the elements of the output trajectory $f_{hba}$ by placing elements from  $\mathbf{f}_k$ and $\mathbf{c}_k$ in an alternating fashion,

\begin{equation}\label{eq:hba}
\begin{aligned}
f_{hba}(\mathbf{y}_k) = \mathbf{f}_k, \mathbf{c}_k = [ (1 - \alpha) {\mathbf{y}}^1_k + (\alpha - 1) {\mathbf{y}}^2_k, \,\, \alpha {\mathbf{y}}^1_k + (1 - \alpha) {\mathbf{y}}^2_k, \,\, \cdots \,\, ,\\ (1 - \alpha) {\mathbf{y}}^{T_k -1}_k + (\alpha - 1) {\mathbf{y}}^{T_k}_k, \,\, \alpha {\mathbf{y}}^{T_k -1}_k + (1 - \alpha) {\mathbf{y}}^{T_k}_k ].
\end{aligned}
\end{equation}

As each element ${\mathbf{y}}^j_{k} \in \mathbb{R}^d$, we can further simplify the output trajectory $f_{hba}$ in terms of the individual elements in ${\mathbf{y}}^j_k$.
This results in a block diagonal Jacobian $\mathbf{J}_\textit{hba} \in \mathbb{R}^{d\cdot T_k \times d\cdot T_k}$ of $f_\textit{hba}$ of the form,

\begin{equation}
\begin{aligned}
\mathbf{J}_{hba} = \begin{pmatrix} 
(1 - \alpha) & (\alpha - 1) & 0 & 0 & 0 & \cdots & 0 & 0\\
\alpha & (1 - \alpha) & 0 & 0 & 0 & \cdots & 0 & 0\\
0 & 0 & (1 - \alpha) & (\alpha - 1) &  0 & \cdots & 0 & 0\\
0 & 0 & \alpha & (1 - \alpha) & 0 & \cdots & 0 & 0\\
\vdots & \vdots & \vdots & \vdots & \vdots & \ddots & \vdots & \vdots\\
0 & 0 & 0 & 0 & 0 & \cdots & (1 - \alpha) & (\alpha - 1) \\
0 & 0 & 0 & 0 & 0 & \cdots  & \alpha & (1 - \alpha) &\\
\end{pmatrix} .
\end{aligned}
\end{equation}

The repeating block in $\mathbf{J}_\textit{hba}$ repeats $\nicefrac{(d \cdot T_k)}{2} $ times  as the trajectory is of length $T_k$ and each element of the trajectory has $d$ dimensions. Therefore, the determinant of the Jacobian $\mathbf{J}_\textit{hba}$ is $(1 - \alpha)^{\nicefrac{( d \cdot T_k )}{2}}$.

To show that $f_\textit{hba} = f_\textit{haar} \circ f_\textit{eo}$ is invertible, first note that $f_\textit{eo}$ rearranges the elements of the input trajectory as is thus trivially invertible.
Now, note that $f_{haar}$ is a linear system. For $\alpha \in [0,1)$ we see that $\det \mathbf{J}_{hba} > 0$. Thus, the linear system $f_{haar}$ in Eq.\ (4) in the main paper is non-singular and invertible. Thus, $f_\textit{hba}$ is invertible.
\end{proof}

\section*{Appendix B. Architecture and Optimization}
Here, we provide additional architectural details of our {HBA-Flow} model in Fig. 2 (right), in particular the split coupling flows. The split coupling flows in our {HBA-Flow} model are based on those of FloWaveNet \cite{kim2019flowavenet}. However, as mentioned in the main paper, we employ more powerful non-linear squared flows \cite{ziegler2019latent} across baselines versus the affine flows used in \cite{kim2019flowavenet}. The non-causal wavenets in the split coupling flows are similar to the ones employed in \cite{kim2019flowavenet} with 4 convolutional layers with 256 filters each. In practice, we do not find it necessary to employ activation normalization layers along with the more powerful non-linear squared flows. We use identical non-causal wavenets to learn the parameters of our {HBA-Prior}. 

Finally, note that we train the full {HBA-Flow}  model along with the prior using the {AdaMax} \cite{kingma2014adam} optimizer. The ``mixing'' parameter $\alpha$ in $f_\textit{hba}$ is learnable, although $\alpha=0.5$ also works well in practice. 

\section*{Appendix C. Qualitative Results}
We provide additional qualitative results on Stanford Drone in \cref{fig:stanford_drone_dist} and Intersection Drone in \cref{fig:ind_dist} comparing to FloWaveNet \cite{kim2019flowavenet}. These results further support the results in Figs.\ 4 and 5 in the main paper. We again see that the predictions of our HBA-Flow model are more diverse and can more effectively capture the modes of the trajectory distributions at complex traffic situations like intersections and crossings. Again, this is further supported by the top 10\% of predictions, which are closer to the groundtruth trajectories.

\begin{figure}[h]
\centering
\scriptsize
\begin{tabular}{cccc} 

\toprule
Observed & \shortstack{Mean Top 10\%\\ \textcolor[HTML]{4363D8}{B} - GT, \textcolor[HTML]{FFE119}{Y} -\cite{kim2019flowavenet}, \textcolor[HTML]{E6194B}{R} - Ours}   &  \shortstack{FloWaveNet
\cite{kim2019flowavenet}\\ Predictions} & \shortstack{\emph{HBA-Flow} (Ours)\\Predictions} \\
\midrule
\includegraphics[width=0.22\linewidth]{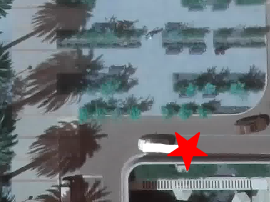} &
\includegraphics[width=0.22\linewidth]{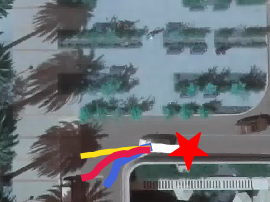} &
\includegraphics[width=0.22\linewidth]{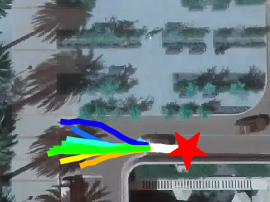} &
\includegraphics[width=0.22\linewidth]{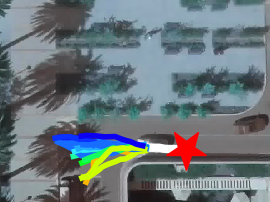}\\

\includegraphics[width=0.22\linewidth]{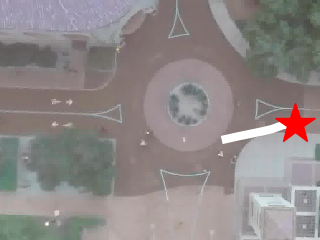} &
\includegraphics[width=0.22\linewidth]{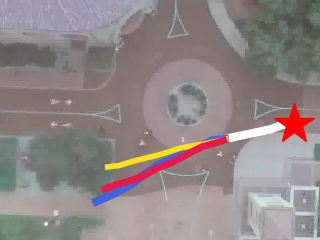} &
\includegraphics[width=0.22\linewidth]{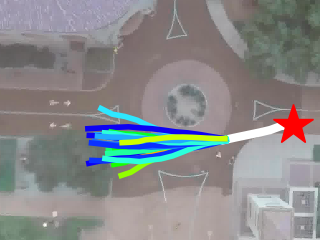} &
\includegraphics[width=0.22\linewidth]{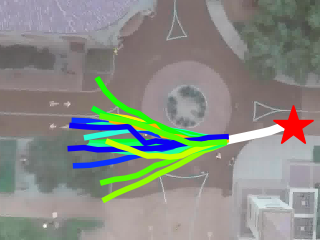}\\

\includegraphics[width=0.22\linewidth]{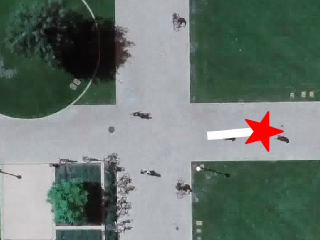} &
\includegraphics[width=0.22\linewidth]{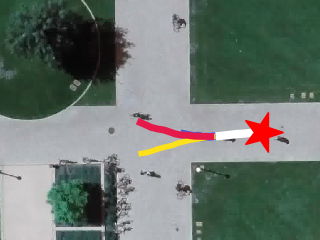} &
\includegraphics[width=0.22\linewidth]{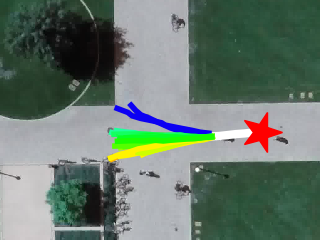} &
\includegraphics[width=0.22\linewidth]{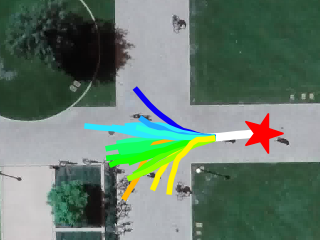}\\

\includegraphics[width=0.22\linewidth]{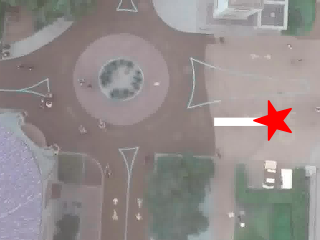} &
\includegraphics[width=0.22\linewidth]{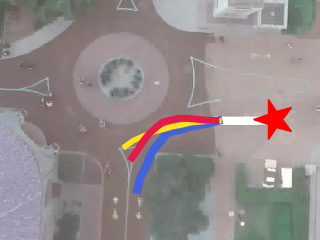} &
\includegraphics[width=0.22\linewidth]{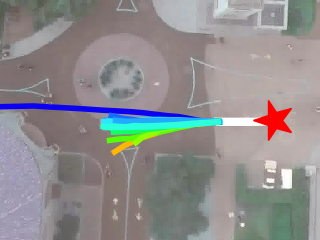} &
\includegraphics[width=0.22\linewidth]{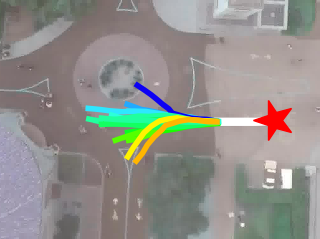}\\

\bottomrule

\end{tabular}
\caption{Mean top 10\% predictions (\textcolor[HTML]{4363D8}{Blue} - Groudtruth, \textcolor[HTML]{FFE119}{Yellow} - FloWaveNet \cite{kim2019flowavenet}, \textcolor[HTML]{E6194B}{Red} - Our \emph{HBA-Flow} model) and predictive distributions on Stanford Drone dataset.
The predictions of our {HBA-Flow} model are more diverse and better capture the modes of the future trajectory distribution.}
\label{fig:stanford_drone_dist}
\end{figure}

\begin{figure}[h]
\centering
\scriptsize
\begin{tabular}{cccc} 
\toprule
Observed & \shortstack{Mean Top 10\%\\ \textcolor[HTML]{4363D8}{B} - GT, \textcolor[HTML]{FFE119}{Y} -\cite{kim2019flowavenet}, \textcolor[HTML]{E6194B}{R} - Ours}   &  \shortstack{FloWaveNet
\cite{kim2019flowavenet}\\ Predictions} & \shortstack{\emph{HBA-Flow} (Ours)\\Predictions} \\
\midrule
\includegraphics[width=0.22\linewidth]{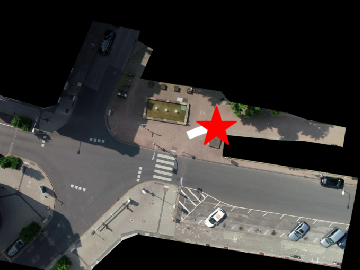} &
\includegraphics[width=0.22\linewidth]{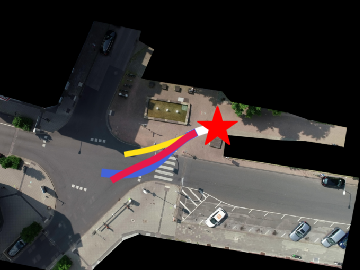} &
\includegraphics[width=0.22\linewidth]{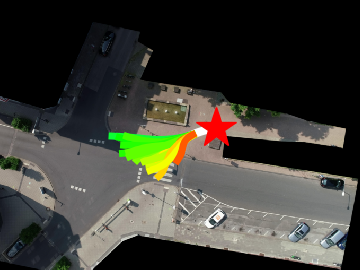} &
\includegraphics[width=0.22\linewidth]{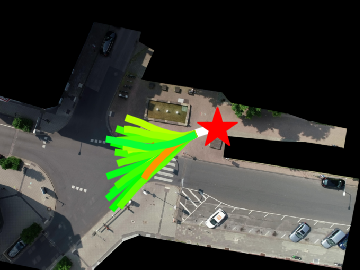}\\

\includegraphics[width=0.22\linewidth]{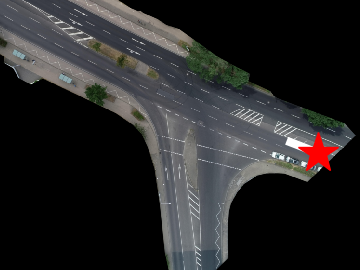} &
\includegraphics[width=0.22\linewidth]{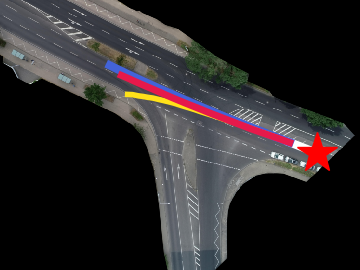} &
\includegraphics[width=0.22\linewidth]{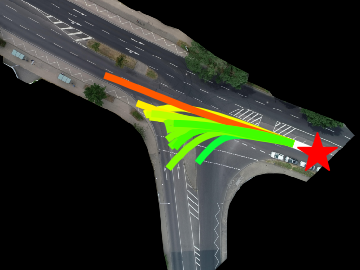} &
\includegraphics[width=0.22\linewidth]{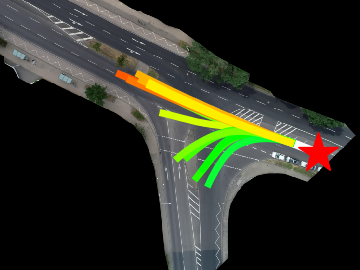}\\

\includegraphics[width=0.22\linewidth]{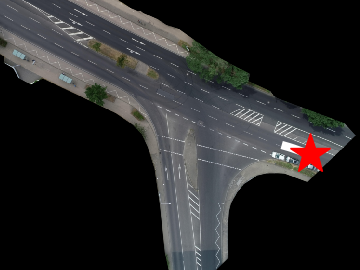} &
\includegraphics[width=0.22\linewidth]{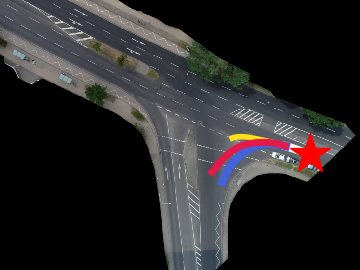} &
\includegraphics[width=0.22\linewidth]{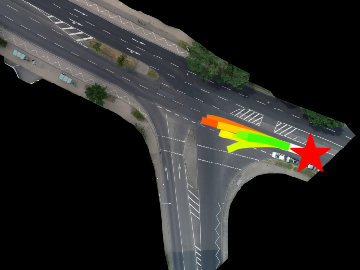} &
\includegraphics[width=0.22\linewidth]{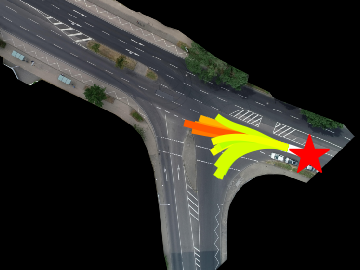}\\

\includegraphics[width=0.22\linewidth]{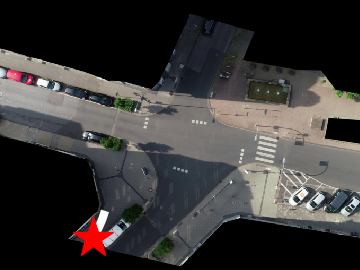} &
\includegraphics[width=0.22\linewidth]{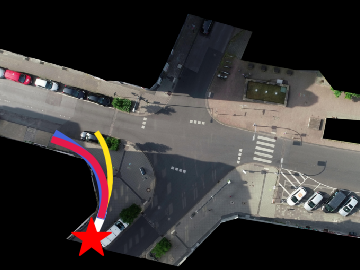} &
\includegraphics[width=0.22\linewidth]{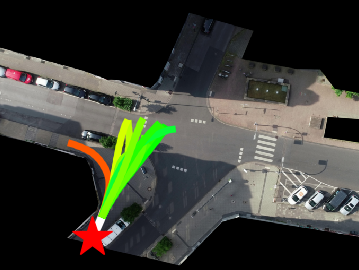} &
\includegraphics[width=0.22\linewidth]{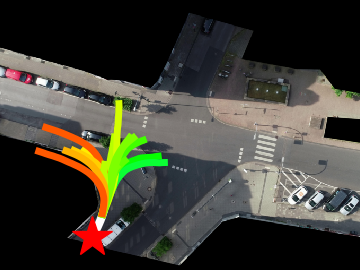}\\

\bottomrule

\end{tabular}
\caption{Mean top 10\% predictions (\textcolor[HTML]{4363D8}{Blue} - Groudtruth, \textcolor[HTML]{FFE119}{Yellow} - FloWaveNet \cite{kim2019flowavenet}, \textcolor[HTML]{E6194B}{Red} - Our \emph{HBA-Flow} model) and predictive distributions on Intersection Drone dataset.
The predictions of our {HBA-Flow} model are more diverse and better capture the modes of the future trajectory distribution.}
\label{fig:ind_dist}
\end{figure}

\end{document}